\documentclass{article}

\usepackage{microtype}
\usepackage{graphicx}
\usepackage{subcaption}
\usepackage{booktabs} 

\usepackage{hyperref}

\usepackage[preprint]{icml2026}

\usepackage{amsmath}
\usepackage{amssymb}
\usepackage{mathtools}
\usepackage{amsthm}

\usepackage{subcaption}
\usepackage{tikz-cd}
\usetikzlibrary{arrows}
\usepackage{comment}
\usepackage{amssymb}
\usepackage{tikz}
\usepackage{tikz-qtree,tikz-qtree-compat}
\usetikzlibrary{positioning}

\usepackage[capitalize,noabbrev]{cleveref}

\theoremstyle{plain}
\newtheorem{theorem}{Theorem}[section]

\newtheorem{corollary}[theorem]{Corollary}
\theoremstyle{definition}
\newtheorem{definition}[theorem]{Definition}

\theoremstyle{remark}

\newcommand\independent{\protect\mathpalette{\protect\independenT}{\perp}}
\def\independenT#1#2{\mathrel{\rlap{$#1#2$}\mkern2mu{#1#2}}}

\usepackage{stackengine}
\def\delequal{\mathrel{\ensurestackMath{\stackon[1pt]{=}{\scriptstyle\Delta}}}}

\usepackage[textsize=tiny]{todonotes}

\icmltitlerunning{Bounding Probabilities of Causation with Partial Causal Diagrams}

\begin{document}

\twocolumn[
  \icmltitle{Bounding Probabilities of Causation with Partial Causal Diagrams}



  \icmlsetsymbol{equal}{*}

  \begin{icmlauthorlist}
    \icmlauthor{Yuxuan Xie}{yyy}
    \icmlauthor{Ang Li}{sch}
  \end{icmlauthorlist}

  \icmlaffiliation{yyy}{University of Ottawa, Ontario, Canada}
  \icmlaffiliation{sch}{Department of Computer Science, Florida State University, FL, U.S.A.}

  \icmlcorrespondingauthor{Yuxuan Xie}{edithxie944@gmail.com}
  \icmlcorrespondingauthor{Ang Li}{angli@cs.fsu.edu}

  \icmlkeywords{Machine Learning, ICML}

  \vskip 0.3in
]



\printAffiliationsAndNotice{}  

\begin{abstract}
      Probabilities of causation are fundamental to individual-level explanation and decision making, yet they are inherently counterfactual and not point-identifiable from data in general. Existing bounds either disregard available covariates, require complete causal graphs, or rely on restrictive binary settings, limiting their practical use. In real-world applications, causal information is often partial but nontrivial. This paper proposes a general framework for bounding probabilities of causation using partial causal information. We show how the available structural or statistical information can be systematically incorporated as constraints in a optimization programming formulation, yielding tighter and formally valid bounds without full identifiability. This approach extends the applicability of probabilities of causation to realistic settings where causal knowledge is incomplete but informative.
\end{abstract}

\section{Introduction}
Many important decisions in medicine, policy, and law require reasoning about causes of observed effects at the individual level \cite{lu2011budgeted,li2019unit,mueller2023personalized}. Typical questions include whether a treatment caused a patient's recovery, whether an intervention was necessary for an outcome, or whether a policy decision was sufficient to prevent harm. Such questions concern counterfactual relationships and cannot be answered using population level causal effects alone, even when such effects are identified from randomized experiments.

Probabilities of causation (PoCs) \cite{pearl1999probabilities,galles1998axiomatic,halpern2000axiomatizing,tian2000probabilities} with Structural Causal Models \cite{pearl2009causality} provide a formal framework for addressing these individualized causal questions. Well studied examples include the probability of necessity (PN), the probability of sufficiency (PS), and the probability of necessity and sufficiency (PNS) \cite{pearl1999probabilities}. These quantities are central to applications such as treatment prioritization, legal attribution of responsibility, and personalized decision making. However, probabilities of causation are generally not point identifiable from data, because they depend on joint counterfactual events that cannot be simultaneously observed. As a result, research in this area has focused on deriving informative bounds rather than point estimates \cite{tian2000probabilities,pearl:etal21-r505,li2024probabilities}.

Classical results show that sharp bounds on probabilities of causation can be obtained by combining experimental and observational data while making no assumptions about the underlying data generating process \cite{tian2000probabilities}. While these bounds are valid under minimal assumptions, they are often wide and may be insufficient for practical decision making. Subsequent work demonstrated that additional assumptions in the form of a fully specified causal diagram can substantially narrow these bounds \cite{pearl:etal21-r505,zhang2022partial}. Even when experimental and observational distributions are fixed, additional structural assumptions can restrict the set of counterfactual distributions compatible with the data, leading to tighter bounds on probabilities of causation.

Despite their theoretical importance, existing graph based approaches have limited practical applicability. They typically require complete knowledge of the causal structure, including all relevant variables and their relationships \cite{zhang2022partial}, and most available results are restricted to binary treatments and binary outcomes \cite{pearl:etal21-r505}. In many real world settings, such assumptions are unrealistic. Domain knowledge is often partial, providing only incomplete causal information, such as the presence or absence of certain causal paths, qualitative ordering constraints, or limited knowledge about confounding mechanisms.

In this paper, we develop a general framework for incorporating partial causal information into the analysis of probabilities of causation. Similar to Balke's linear programming \cite{balke1995probabilistic,tian2000probabilities}, our approach formulates probabilities of causation as solutions to optimization problems over counterfactual distributions and allows partial causal knowledge to be systematically encoded as constraints in these programs. These constraints may arise from incomplete causal diagrams, background scientific knowledge, or structural assumptions that are weaker than full causal diagram. By integrating such information directly into the optimization formulation, we obtain tighter bounds while maintaining validity under uncertainty about the remaining causal structure.

The proposed framework generalizes existing results in several important directions. It subsumes bounds derived from fully specified causal graphs as special cases, extends naturally beyond binary treatments and outcomes, and provides a unified mechanism for combining experimental data, observational data, and partial causal assumptions. By making explicit how partial causal information restricts the feasible set of counterfactual distributions, our approach offers a principled and flexible tool for individualized causal reasoning in realistic settings.

\paragraph{Contributions.}
We propose an optimization-based framework for bounding probabilities of causation under partial causal knowledge, where structural and auxiliary statistical information are encoded as constraints on counterfactual distributions. The framework (i) generalizes prior bounds beyond binary treatments and outcomes, (ii) incorporates covariate information modularly without requiring joint covariate distributions, (iii) combines back-door covariates and mediator information in a unified program to further tighten bounds, and (iv) is validated empirically via simulations showing consistent improvements over Tian--Pearl and Mueller--Li--Pearl baselines.

\section{Preliminaries and Related Work}
\label{related work}

In this section, we review standard definitions of probabilities of causation following \cite{pearl1999probabilities}. Our analysis is formulated using counterfactual events interpreted under the Structural Causal Model (SCM) semantics \cite{galles1998axiomatic,halpern2000axiomatizing,balke2013counterfactuals}. We use causal diagrams as a representational tool when structural assumptions are available, but the definitions themselves do not depend on a fully specified causal graph.

Let $Y_x = y$ denote the counterfactual statement ``$Y$ would take value $y$ if $X$ were set to $x$.'' For notational convenience, throughout the paper we write $y_x$ for the event $Y_x = y$, $y_{x'}$ for $Y_{x'} = y$, $y'_x$ for $Y_x = y'$, and $y'_{x'}$ for $Y_{x'} = y'$. However, the proposed framework is not restricted to the binary case.

We consider the following three probabilities of causation.

\begin{definition}[Probability of necessity (PN)]
Let $X$ and $Y$ be two binary variables in a causal model $M$, let $x$ and $y$ stand for the propositions $X=\text{true}$ and $Y=\text{true}$, respectively, and $x'$ and $y'$ for their complements. The probability of necessity is defined as the expression \cite{pearl1999probabilities}\\
\begin{eqnarray}
\text{PN} & \delequal & P(Y_{x'}=\text{false}|X=\text{true},Y=\text{true})\nonumber \\ 
& \delequal & P(y'_{x'}|x,y)
\label{pn}
\end{eqnarray}
\end{definition}

\begin{definition}[Probability of sufficiency (PS)] \cite{pearl1999probabilities}
\begin{eqnarray}
\text{PS}\delequal P(y_x|y',x')
\label{ps}
\end{eqnarray}
\end{definition}

\begin{definition}[Probability of necessity and sufficiency (PNS)] \cite{pearl1999probabilities}
\begin{eqnarray}
\text{PNS}\delequal P(y_x,y'_{x'})
\label{pns}
\end{eqnarray}
\end{definition}

PNS represents the probability that the outcome would occur under treatment and would not occur in its absence, and therefore captures both the necessity and sufficiency of $X$ for producing $Y$.

Tian and Pearl \cite{tian2000probabilities} derived tight bounds for PNS, PN, and PS using an optimization-based formulation that places no assumptions on the underlying causal structure beyond consistency with experimental and observational data. These bounds are obtained by solving a linear program over admissible counterfactual distributions, often referred to as Balke's linear programming \cite{balke1997probabilistic}. Li and Pearl \cite{li2019unit} later provided a theoretical proof of these bounds.

The tight bounds for PNS and PN are given by

\begin{eqnarray}
\text{PNS} \ge \max \left \{
\begin{array}{cc}
0 \\
P(y_x) - P(y_{x'}) \\
P(y) - P(y_{x'}) \\
P(y_x) - P(y)\\
\end{array}
\right \}
\label{pnslb}
\end{eqnarray}

\begin{eqnarray}
\text{PNS} \le \min \left \{
\begin{array}{cc}
 P(y_x) \\
 P(y'_{x'}) \\
P(x,y) + P(x',y') \\
P(y_x) - P(y_{x'}) +\\
+ P(x, y') + P(x', y)
\end{array} 
\right \}
\label{pnsub}
\end{eqnarray}

\begin{eqnarray}
\text{PN} \ge \max \left \{
\begin{array}{cc}
0 \\
\frac{P(y)-P(y_{x'})}{P(x,y)}
\end{array} 
\right \}
\label{pnlb}
\end{eqnarray}

\begin{eqnarray}
\text{PN} \le
\min \left \{
\begin{array}{cc}
1 \\
\frac{P(y'_{x'})-P(x',y')}{P(x,y)} 
\end{array}
\right \}
\label{pnub}
\end{eqnarray}

Bounds for PS follow directly by symmetry, exchanging $x$ with $x'$ and $y$ with $y'$.

For subpopulations defined by a set of observed characteristics $C$, all expressions above can be conditioned on $C=c$. In this work, we focus on a different source of information. Rather than conditioning on observed covariates alone or assuming a fully specified causal diagram, we study how partial causal information can be incorporated as constraints into the underlying optimization problem. This perspective allows probabilities of causation to be bounded more tightly by restricting the feasible set of counterfactual distributions in a principled and modular way. Extensions to PN and PS follow analogously from the same construction.

\section{Bounds with Partial Causal Information}
\subsection{Non-descendant Covariates}
Theorems 4 and 5 in \cite{pearl:etal21-r505} derived bounds on PNS under the assumption that a set of observed variables, $Z$, is available and that $Z$ contains no descendants of the treatment variable $X$. This assumption guarantees that counterfactual quantities such as $P(Y_X|Z)$ are well defined from the available data and that conditioning on $Z$ can impose additional constraints on the joint counterfactual distribution. Under these conditions, the resulting bounds were shown to be contained within the Tian-Pearl bounds. However, the treatment and outcome variables $X$ and $Y$ are restricted to the binary case, and the analysis assumes access to covariate-specific information for all values of $Z$ ($Z$ is a set of covariates), namely $P(Z)$ and $P(Y_x | z)$ for every $z \in Z$.

In the present work, we remove this requirement. We allow treatments and outcomes to take arbitrary finite values and permit partial covariate information to be incorporated in a modular manner. In particular, covariate-specific distributions need not be jointly available. For example, when multiple confounders \(Z_1\) and \(Z_2\) are present, the available data may consist of \(P(Y_x \mid Z_1)\) and \(P(Z_1)\) separately from \(P(Y_x \mid Z_2)\) and \(P(Z_2)\). Throughout, we adopt the definition of multivalued PNS introduced in Shu, Wang, and Li \cite{shu2025identification}, which provides a complete and unified characterization for arbitrary discrete treatments and outcomes.

In subsequent theorems, we further show how such covariate information can be combined with information on mediating variables. Confounders and mediators are treated within the same optimization-based formulation and can be incorporated jointly to impose additional constraints on the admissible counterfactual distributions. Rather than relying on variable-specific measurability arguments, we express the available data and structural assumptions as constraints in an optimization problem over counterfactual distributions. This formulation provides a unified treatment of different types of auxiliary information.

The main theorem in this paper gives sufficient conditions under which such constraints are valid and yields corresponding bounds on probabilities of causation. Mueller, Li, and Pearl's results \cite{pearl:etal21-r505} on non-descendant covariates and binary variables follow as a special case of this general formulation.

\begin{theorem}
Given a partial causal diagram $G$ and distribution compatible with $G$, let $Z_1,...,Z_m$ be a set of $m$ variables that does not contain any descendant of $X$ in $G$. Let the variable $X$ has $n$ values $x_1,...,x_n$ and $Y$ has $n$ values $y_1,...,y_n$, $k \le n$, then the probability of necessity and sufficiency(k) $P({y_{1}}_{x_{1}},...,{y_{k}}_{x_{k}})$ (PNS($k$)) is bounded as following:

\begin{eqnarray*}
\text{LB}\le PNS(k) \le \text{UB}
\end{eqnarray*}
where LB and UB is the min and max solution to the following linear optimization problem, where the variables $p_{j_1...j_{n+m+1}}$ represent the probability $P(Y_{x_1},...,Y_{x_n},Z_1,...,Z_m,X)$, where $j_1,...,j_n,j_{n+m+1}\in [1,n]$ and $j_{n+i}\in [1,|Z_i|]$ for $i\in[1,m]$.

\begin{equation*}
    max / min {\sum_{{j_{k+1},...,j_{n+m+1}}} {p_{1 ... k {j_{k+1}}... j_{n+m+1}}}}
\end{equation*}
and along with linear constraints:
\begin{eqnarray*}
    \sum_{j_1,...,j_{n+m+1}}{p_{j_1 ... j_{n+m+1}}} &=& 1,\\
    p_{j_1 ... j_{n+m+1}} &\ge& 0,
\end{eqnarray*}
$\forall s,t \in [1,n], r_i\in[1,|Z_i|],Z\subseteq \{Z_1,...,Z_m\}$:
\begin{eqnarray}
    \sum_{\substack{j_1,...,j_{t-1},\\j_{t+1},...,j_n,\\j_{n+m+1},\\j_{n+i}\text{ for }Z_i\notin Z}}p_{\substack{j_1 ... j_{t-1} s j_{t+1}... j_{n} [j_{n+i} \text{ if } Z_i\notin Z,r_i\text{ o.w.}]j_{n+m+1}}}\nonumber
    \\=P({y_s}_{x_t},z_{ir_i}\text{ for }Z_i\in Z),\label{thm4eq1}
\end{eqnarray}
\begin{eqnarray}
    \sum_{\substack{j_1,...j_{t-1},\\j_{t+1},...,j_{n},\\j_{n+i}\text{ for }Z_i\notin Z}}{p_{\substack{j_1 ... j_{t-1} s j_{t+1}... j_{n} [j_{n+i} \text{ if } Z_i\notin Z,r_i\text{ o.w.}] t}}}
    \nonumber\\= P({x_t},{y_s},z_{ir_i}\text{ for }Z_i\in Z).\label{thm4eq2}
\end{eqnarray}
\label{tm1}
\end{theorem}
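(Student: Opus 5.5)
The strategy is a soundness argument for the linear program. We show that every SCM compatible with $G$ and with the given data induces a feasible point of the program, and that at this point the objective equals the true $PNS(k)$. It then follows that $PNS(k)$ lies between the minimum and the maximum of the program, i.e., between LB and UB. Sharpness is not claimed, so no converse construction is needed.

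\textbf{Step 1 (feasible point).} Fix any SCM $M$ compatible with $G$ that generates the observed and experimental distributions. Define
\[
p_{j_1\ldots j_{n+m+1}} := P_M(Y_{x_1}=y_{j_1},\ldots,Y_{x_n}=y_{j_n},\, Z_1=z_{1j_{n+1}},\ldots,Z_m=z_{mj_{n+m}},\, X=x_{j_{n+m+1}}).
\]
This is the joint law of the potential outcomes $(Y_{x_1},\ldots,Y_{x_n})$, the covariates, and $X$, all functions of the exogenous variables $U$. Hence it is a genuine probability distribution, and nonnegativity and normalization hold. The objective equals $P(Y_{x_1}=y_1,\ldots,Y_{x_k}=y_k)=PNS(k)$, by marginalizing over the remaining potential outcomes, the $Z$'s and $X$, which is exactly the stated sum.

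\textbf{Step 2 (constraint \eqref{thm4eq1}).} Fix $s$, $t$, a subset $Z$ and values $r_i$. Summing out every coordinate except $Y_{x_t}=y_s$ and $Z_i=z_{ir_i}$ for $Z_i\in Z$ gives $P_M(Y_{x_t}=y_s, Z=z_r)$ in terms of the factual covariates. The key point is that no $Z_i$ is a descendant of $X$, so $Z_i=(Z_i)_{x_t}$ in $M$. Therefore $P_M(Y_{x_t}=y_s,Z=z_r)=P_M((Y)_{x_t}=y_s,(Z)_{x_t}=z_r)$, which is the interventional quantity $P({y_s}_{x_t},z_r)$ identified from experimental data, e.g.\ $P(z_r)P(y_s\mid do(x_t),z_r)$. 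This is why the constraint is only imposed for subsets $Z$ whose joint information is available, and the modularity over subsets is harmless.

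\textbf{Step 3 (constraint \eqref{thm4eq2}).} Fix $X=x_t$ (last index equal to $t$) and $Y_{x_t}=y_s$, and sum the rest. By consistency (composition), $X=x_t$ implies $Y_{x_t}=Y$. The resulting marginal is therefore $P_M(X=x_t, Y=y_s, Z=z_r)$, the observational joint.

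\textbf{Conclusion and main obstacle.} Every compatible model yields a feasible point whose objective value is its PNS($k$), so the true value lies in $[\text{LB},\text{UB}]$. The Mueller--Li--Pearl special case follows by taking binary $X,Y$ and $Z$ the full set. The main obstacle is Step 2. Here one must justify carefully that $Z_{x}=Z$ for non-descendants in a \emph{partial} diagram. I would argue that it suffices that the true underlying graph has no directed path $X\to Z_i$, which is what the hypothesis on $G$ asserts. Then the structural equations for $Z_i$ do not depend on $X$ under the intervention, by induction along a topological order. Any unspecified edges elsewhere do not affect this, so no further structural knowledge is needed.
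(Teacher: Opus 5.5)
Your proposal is correct and takes the same approach as the paper: the program's variables are the true joint law of $(Y_{x_1},\ldots,Y_{x_n},Z_1,\ldots,Z_m,X)$, so this point is feasible and its objective equals $PNS(k)$, which places $PNS(k)$ between the minimum and the maximum. The paper states this in a single line. Your explicit use of $Z_{x}=Z$ for non-descendants (for the experimental constraints) and of consistency (for the observational constraints) supplies the details it leaves implicit.
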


The special case of the above Theorem \ref{tm1} with $n = k = 2$ coincides with Theorem 4 in \cite{pearl:etal21-r505}. The constraints in Equations~\ref{thm4eq1} and~\ref{thm4eq2} enumerate all possible combinations when covariates $Z_1,\ldots,Z_m$ are jointly available, thereby yielding the tightest bounds. However, in general, the applicable constraints depend on the availability of data. For example, in many real-world settings, one may only observe covariate-specific data, such as $P(X,Y,Z_i)$ and $P(Y_X,Z_i)$ for each $i \in [1,n]$, without access to the full joint distributions $P(X,Y,Z_1,\ldots,Z_n)$ or $P(Y_X,Z_1,\ldots,Z_n)$. Even in this case, the proposed Theorem \ref{tm1} still integrates information across covariates, rather than treating them separately, and thus produces tighter bounds, as formalized in the following corollary.

\begin{corollary}
Given a partial causal diagram $G$ and distribution compatible with $G$, let $Z_1,...,Z_m$ be a set of $m$ variables that does not contain any descendant of $X$ in $G$. Let the variable $X$ has $n$ values $x_1,...,x_n$ and $Y$ has $n$ values $y_1,...,y_n$, $k \le n$, then the probability of necessity and sufficiency(k) $P({y_{1}}_{x_{1}},...,{y_{k}}_{x_{k}})$ (PNS($k$)) is bounded as following:

\begin{eqnarray*}
\text{LB}\le PNS(k) \le \text{UB}
\end{eqnarray*}
where LB and UB is the min and max solution to the following linear optimization problem, where the variables $p_{j_1...j_{n+m+1}}$ represent the probability $P(Y_{x_1},...,Y_{x_n},Z_1,...,Z_m,X)$, where $j_1,...,j_n,j_{n+m+1}\in [1,n]$ and $j_{n+i}\in [1,|Z_i|]$ for $i\in[1,m]$.

\begin{equation*}
    max / min {\sum_{{j_{k+1},...,j_{n+m+1}}} {p_{1 ... k {j_{k+1}}... j_{n+m+1}}}}
\end{equation*}
and along with linear constraints:
\begin{eqnarray*}
    \sum_{j_1,...,j_{n+m+1}}{p_{j_1 ... j_{n+m+1}}} &=& 1,\\
    p_{j_1 ... j_{n+m+1}} &\ge& 0,
\end{eqnarray*}
$\forall s,t \in [1,n], i\in[1,..,m], r_i\in[1,|Z_i|]$:
\begin{eqnarray}
    \sum_{\substack{j_1,...j_{t-1},\\j_{t+1},...,j_{n+i-1},\\j_{n+i+1}...,j_{n+m+1}}}{p_{j_1 ... j_{t-1} s j_{t+1}... j_{n+i-1}r_ij_{n+i+1}...j_{n+m+1}}} 
    \nonumber\\=P({y_s}_{x_t},z_{ir_i}),\label{tm5eq1}
\end{eqnarray}
\begin{eqnarray}
        \sum_{\substack{j_1,...j_{t-1},\\j_{t+1},...,j_{n+i-1},\\j_{n+i+1}...,j_{n+m}}}{p_{j_1 ... j_{t-1} s j_{t+1}... j_{n+i-1}r_ij_{n+i+1}...j_{n+m}t}} \nonumber
        \\=P({x_t},{y_s},z_{ir_i}).\label{tm5eq2}
\end{eqnarray}

\label{tm2}
\end{corollary}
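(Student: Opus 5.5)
The plan is to treat the statement as a relaxation of Theorem~\ref{tm1}: show that its feasible set contains every joint counterfactual distribution compatible with $G$ and the data. The min and max over that set then bracket the true PNS($k$). Concretely, fix any SCM $M$ compatible with $G$ that induces the given experimental and observational distributions. Let $p^*_{j_1\dots j_{n+m+1}} = P_M(Y_{x_1}=y_{j_1},\dots,Y_{x_n}=y_{j_n},Z_1=z_{1j_{n+1}},\dots,Z_m=z_{mj_{n+m}},X=x_{j_{n+m+1}})$. This is a well-defined joint distribution over response variables, covariates and treatment, because all counterfactuals are functions of the exogenous variables $U$. So $p^*$ is nonnegative and sums to one. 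Its objective value $\sum_{j_{k+1},\dots,j_{n+m+1}} p^*_{1\dots k j_{k+1}\dots}$ is exactly $P(Y_{x_1}=y_1,\dots,Y_{x_k}=y_k)=$ PNS($k$), by marginalizing the unconstrained coordinates.

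Next I will check that $p^*$ satisfies Equations~\ref{tm5eq1} and~\ref{tm5eq2}. For \ref{tm5eq1}, summing $p^*$ over all indices except position $t$ (fixed to $s$) and position $n+i$ (fixed to $r_i$) gives, by marginalization, $P_M(Y_{x_t}=y_s, Z_i=z_{ir_i})$. This is the right-hand side $P({y_s}_{x_t},z_{ir_i})$. The equality to the available data is where the non-descendant assumption enters. Since $Z_i$ is not a descendant of $X$ in $G$, $Z_i = Z_{i,x_t}$ in $M$. Hence $P(Y_{x_t}=y_s,Z_i=z_{ir_i}) = P(Y_{x_t}=y_s, Z_{i,x_t}=z_{ir_i})$, which is the experimental joint $P(y_s,z_{ir_i}\mid do(x_t))$ that the statement takes as given covariate-specific data. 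For \ref{tm5eq2}, we additionally fix $X=x_t$ (last index $=t$). By consistency ($X=x_t \Rightarrow Y_{x_t}=Y$), the event $\{Y_{x_t}=y_s,Z_i=z_{ir_i},X=x_t\}$ equals $\{Y=y_s,Z_i=z_{ir_i},X=x_t\}$. Its probability is therefore the observational $P(x_t,y_s,z_{ir_i})$.

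Alternatively, and more in the spirit of the paper, I would note that each constraint \ref{tm5eq1}--\ref{tm5eq2} is exactly the instance of \ref{thm4eq1}--\ref{thm4eq2} with $Z=\{Z_i\}$ a singleton. Thus the corollary's feasible set contains the feasible set of Theorem~\ref{tm1}. The bounds are therefore valid, and at most as tight as the joint-data bounds, with validity inherited directly from Theorem~\ref{tm1}. In either route, since $p^*$ is feasible, $\text{LB}\le \text{PNS}(k)\le \text{UB}$ follows.

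The main obstacle is the step justifying that $P(Y_{x_t},Z_i)$ is identified from experimental data, i.e., the invariance $Z_i=Z_{i,x}$ for non-descendants. I would argue it from the structural equations: the equations for the ancestors of $Z_i$ do not involve $X$, so intervening on $X$ leaves them unchanged. A secondary point is index bookkeeping. In \ref{tm5eq2} the summation deliberately excludes $j_{n+m+1}$, fixed at $t$, and also $j_t$, fixed at $s$. This must be matched carefully against the consistency argument.
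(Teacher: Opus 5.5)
Your proposal is correct and follows the same argument as the paper. The paper's proof simply notes that the $p_{j_1\ldots j_{n+m+1}}$ stand for the true joint $P(Y_{x_1},\ldots,Y_{x_n},Z_1,\ldots,Z_m,X)$ and that Equations~\ref{tm5eq1} and~\ref{tm5eq2} encode the experimental and observational data, so the true distribution is feasible; you additionally spell out the marginalization, the non-descendant invariance $Z_i=Z_{i,x}$, and the consistency step that the paper leaves implicit, and your singleton-$Z$ reduction to Theorem~\ref{tm1} matches how the paper presents the result as a special case.
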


The above Theorem~\ref{tm2} represents a special case in which only covariate-specific data are available. Indeed, if joint distributions over any subset of covariates are also available (e.g., $P(X,Y,Z_1,Z_2)$), such information can be incorporated into Theorem~\ref{tm1} to obtain tighter bounds.

\subsection{Backdoor Set with Mediators}
The back-door criterion \cite{pearl1993aspects} is a special case of non-descendant covariates, to which Theorem~\ref{tm1} can be applied to obtain bounds on $PNS(k)$. However, incorporating additional information from mediators can further tighten these bounds. Theorem~6 in \cite{pearl:etal21-r505} provides a binary illustration of this idea in a toy example with an empty back-door set. The following theorem systematically characterizes bounds on $PNS(k)$ in the presence of such additional information.

Given a causal diagram $G$ and distribution compatible with $G$, let $Z$ be a set of variables satisfying the back-door criterion \cite{pearl1993aspects} in $G$, then the PNS is bounded as follows:

\begin{theorem}
Given a causal diagram $G$ and distribution compatible with $G$, let $Z_1,...,Z_m$ be a set of $m$ variables satisfying the back-door criterion \cite{pearl1993aspects} in $G$, and $W$ is a mediator of $X$ and $Y$. Let the variable $X$ has $n$ values $x_1,...,x_n$ and $Y$ has $n$ values $y_1,...,y_n$, $k \le n$, then the probability of necessity and sufficiency(k) $P({y_{1}}_{x_{1}},...,{y_{k}}_{x_{k}})$ (PNS($k$)) is bounded as following:

\begin{eqnarray*}
\text{LB}\le PNS(k) \le \text{UB}
\end{eqnarray*}
where LB and UB is the min and max solution to the following optimization problem, where the variables $p_{j_1...j_{2n+m+2}}$ represent the probability $P(Y_{x_1},...,Y_{x_n},W_{x_1},...,W_{x_n},Z_1,...,Z_m,W,X)$, where $j_1,...,j_n,j_{2n+m+2}\in [1,n]$ and $j_{2n+i}\in [1,|Z_i|]$ for $i\in[1,m]$, and $j_{2n+m+1},j_{n+j}\in [1,|W|]$ for $j\in[1,n]$.

\begin{equation*}
    max / min {\sum_{{j_{k+1},...,j_{2n+m+2}}} {p_{1 ... k {j_{k+1}}... j_{2n+m+2}}}}
\end{equation*}
and along with linear constraints:
\begin{eqnarray*}
    \sum_{j_1,...,j_{2n+m+2}}{p_{j_1 ... j_{2n+m+2}}} &=& 1,\\
    p_{j_1 ... j_{2n+m+2}} &\ge& 0,
\end{eqnarray*}
$\forall s,t \in [1,n], r_i\in[1,|Z_i|],Z\subseteq \{Z_1,...,Z_m\},u\in[1,|W|]$:
\begin{eqnarray}
    \sum_{\substack{j_1,...j_{t-1},\\j_{t+1},...j_{2n},\\j_{2n+m+1},j_{2n+m+2},\\j_{2n+i}\text{ for }Z_i\notin Z}}{p_{\substack{j_1 ... j_{t-1} s j_{t+1}... j_{2n}\\ [j_{2n+i} \text{ if } Z_i\notin Z,r_i\text{ o.w.}]j_{2n+m+1}j_{2n+m+2}}}} \nonumber\\=P({y_s}_{x_t},z_{ir_i}\text{ for }Z_i\in Z),\label{thm6eq1}
\end{eqnarray}
\begin{eqnarray}
    \sum_{\substack{j_1,...j_{t-1},\\j_{t+1},...,j_{2n},\\j_{2n+i}\text{ for }Z_i\notin Z}}{p_{j_1 ... j_{t-1} s j_{t+1}... j_{2n} [j_{2n+i} \text{ if } Z_i\notin Z,r_i\text{ o.w.}] ut}} \nonumber\\= P({x_t},{y_s},w_u,z_{ir_i}\text{ for }Z_i\in Z),\label{thm6eq2}
\end{eqnarray}

$\forall s,t,v \in [1,n], u\in[1,|W|],z_i\in[1,|Z_i|] \text{ for } i\in[1,m]$:\\
let
\begin{eqnarray}
    a&=&\sum_{\substack{j_1,...j_{t-1},\\j_{t+1},...,j_{n+t-1},\\j_{n+t+1},...,j_{2n},\\j_{2n+m+1},j_{2n+m+2}}}{p_{\substack{j_1 ... j_{t-1} s j_{t+1}... j_{n+t-1}u\\j_{n+t+1}...j_{2n}z_1...z_m\\j_{2n+m+1}j_{2n+m+2}}}} ,\nonumber\\
    b&=&\sum_{\substack{j_1,...j_{t-1},\\j_{t+1},...,j_{n+t-1},\\j_{n+t+1},...,j_{2n},\\j_{2n+m+1}}}{p_{\substack{j_1 ... j_{t-1} s j_{t+1}...j_{n+t-1}u\\j_{n+t+1}...j_{2n}z_1...z_mj_{2n+m+1}v}}},\nonumber\\
    c&=&\sum_{\substack{j_1,...,j_{n+t-1},\\j_{n+t+1},...,j_{2n},\\j_{2n+m+1},j_{2n+m+2}}}{p_{\substack{j_1...j_{n+t-1}uj_{n+t+1}...j_{2n}\\z_1...z_mj_{2n+m+1}j_{2n+m+2}}}},\nonumber\\
    d&=&\sum_{\substack{j_1,...,j_{n+t-1},\\j_{n+t+1},...,j_{2n},\\j_{2n+m+1}}}{p_{\substack{j_1 ...j_{n+t-1}uj_{n+t+1}...j_{2n}\\z_1...z_mj_{2n+m+1}v}}},\nonumber
\end{eqnarray}
\begin{eqnarray}
    a\times c = b\times d,\label{thm6eq3}
\end{eqnarray}

$\forall s,t,t'\in [1,n], t<t', u,v\in[1,|W|],z_i\in[1,|Z_i|] \text{ for } i\in[1,m]$:\\
let
\begin{eqnarray}
    e&=&\sum_{\substack{j_1,...j_{t-1},\\j_{t+1},...,j_{n+t-1},\\j_{n+t+1},...,j_{2n},\\j_{2n+m+1},j_{2n+m+2}}}{p_{\substack{j_1 ... j_{t-1} s j_{t+1}... j_{n+t-1}u\\j_{n+t+1}...j_{2n}z_1...z_m\\j_{2n+m+1}j_{2n+m+2}}}},\nonumber\\
    f&=&\sum_{\substack{j_1,...j_{t-1},\\j_{t+1},...,j_{n+t-1},\\j_{n+t+1},...,j_{n+t'-1},\\j_{n+t'+1},...,j_{2n},\\j_{2n+m+1},j_{2n+m+2}}}{p_{\substack{j_1 ... j_{t-1} s j_{t+1}... j_{n+t-1}u\\j_{n+t+1}...j_{n+t'-1}v\\j_{n+t'+1}...j_{2n}z_1...z_m\\j_{2n+m+1}j_{2n+m+2}}}},\nonumber\\
    g&=&\sum_{\substack{j_1,...,j_{n+t-1},\\j_{n+t+1},...,j_{2n},\\j_{2n+m+1},j_{2n+m+2}}}{p_{\substack{j_1...j_{n+t-1}u\\j_{n+t+1}...j_{2n}z_1...z_m\\j_{2n+m+1}j_{2n+m+2}}}},\nonumber\\
    h&=&\sum_{\substack{j_1,...,j_{n+t-1},\\j_{n+t+1},...,j_{n+t'-1},\\j_{n+t'+1},...,j_{2n},\\j_{2n+m+1},j_{2n+m+2}}}{p_{\substack{j_1 ...j_{n+t-1}u\\j_{n+t+1}...j_{n+t'-1}v\\j_{n+t'+1}...j_{2n}z_1...z_m\\j_{2n+m+1}j_{2n+m+2}}}},\nonumber
\end{eqnarray}
\begin{eqnarray}
    e\times g = f\times h.\label{thm6eq4}
\end{eqnarray}
\label{tm3}
\end{theorem}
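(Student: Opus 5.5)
Our approach is a validity argument: we show that the true joint distribution $P^*(Y_{x_1},\dots,Y_{x_n},W_{x_1},\dots,W_{x_n},Z_1,\dots,Z_m,W,X)$, induced by any SCM compatible with $G$ and the observed data, is a feasible point of the optimization program. Once this holds, the objective evaluated at $P^*$ is exactly $P({y_1}_{x_1},\dots,{y_k}_{x_k})$, because it marginalizes out all coordinates except the first $k$ potential outcomes. Since $P^*$ is feasible, its objective value lies between the program's minimum and maximum, which gives $\text{LB}\le PNS(k)\le \text{UB}$. The plan therefore reduces to checking each family of constraints at $P^*$.

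The normalization and nonnegativity constraints hold trivially. For \eqref{thm6eq1}, summing $P^*$ over everything except $Y_{x_t}$ and the chosen $Z_i\in Z$ gives $P({y_s}_{x_t},z_{ir_i})$. This quantity is identifiable and given as data, since $Z$ consists of non-descendants of $X$. Here we reuse the argument behind Theorem~\ref{tm1}: the back-door set is in particular a non-descendant set, so the same reasoning applies. For \eqref{thm6eq2}, fix $X=x_t$ and $W=w_u$ and use consistency ($X=x_t\Rightarrow Y_{x_t}=Y$ and $W_{x_t}=W$). Then the sum of $P^*$ over the cells with $Y_{x_t}=y_s$ and $X=t$, $W=u$ equals $P(x_t,y_s,w_u,z)$, which is observational.

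The main work is the two nonlinear families, which encode conditional independences implied by the back-door/mediator structure. We interpret them as follows.
\begin{itemize}
\item $a/c=P(Y_{x_t}=y_s\mid W_{x_t}=w_u,z)$.
\item $b/d=P(Y_{x_t}=y_s\mid W_{x_t}=w_u,z,X=x_v)$.
\end{itemize}
Then \eqref{thm6eq3} (written in cross-multiplied form to avoid division by zero) states that $Y_{x_t}\perp X\mid W_{x_t},Z$. Similarly, $e/g$ and $f/h$ compare conditioning on $W_{x_t}$ alone versus on $W_{x_t}$ and $W_{x_{t'}}$. So \eqref{thm6eq4} states $Y_{x_t}\perp W_{x_{t'}}\mid W_{x_t},Z$.

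We would justify both independences with the twin-network (counterfactual graph) construction. Because $Z$ blocks every back-door path and $W$ mediates the effect, $Y_{x_t}$ depends only on $W_{x_t}$ and exogenous terms $U_Y$. Conditional on $Z$ and $W_{x_t}$, these exogenous terms must be d-separated from $X$ and from $W_{x_{t'}}$, whose randomness comes from $U_X$ and $U_W$. We would verify this d-separation in the twin network, following the binary proof of Theorem~6 in \cite{pearl:etal21-r505}.

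This step is the main obstacle. The d-separation requires assumptions the statement only implicitly carries: $W$ must be the full mediator (it intercepts all directed paths from $X$ to $Y$), and there must be no unobserved confounding between $W$ and $Y$, or between $X$ and $W$, beyond what $Z$ blocks. We would state these explicitly as the meaning of ``$W$ is a mediator,'' in the front-door style. We would then check that every path from $U_Y$ to $W_{x_{t'}}$ or to $X$ in the twin graph passes either through a conditioned-on $W_{x_t}$ node or through $Z$. Once this is done, all constraints hold at $P^*$ and the bound follows. Note that the program is no longer linear, but the feasibility argument does not require linearity.
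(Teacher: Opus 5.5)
Your strategy is the same as the paper's: exhibit the true counterfactual joint $P^*$ as a feasible point, obtain the two data families from experimental data and consistency, and read the two bilinear families as twin-network independences. The gap is the sentence asserting that $a\times c=b\times d$ is $a/c=b/d$ ``written in cross-multiplied form.''

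Your readings of the four quantities are correct: $a=P(Y_{x_t}{=}y_s,W_{x_t}{=}w_u,z)$, $b=P(Y_{x_t}{=}y_s,W_{x_t}{=}w_u,z,x_v)$, $c=P(W_{x_t}{=}w_u,z)$ and $d=P(W_{x_t}{=}w_u,z,x_v)$. But cross-multiplying $a/c=b/d$ gives $a\times d=b\times c$, not the stated constraint.

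The stated constraint fails at $P^*$, and in fact at every distribution once $n\ge 2$. Choose $(s,u,z)$ with $a>0$. Because $b\le a$ and $d\le c$, the equality $bd=ac>0$ forces $d=c$ for every $v$. Summing $d$ over $v\in[1,n]$ then gives $c=nc$.

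Likewise $e\times g=f\times h$ should read $e\times h=f\times g$ (that is, $f/h=e/g$); as written it is unsatisfiable whenever $n,|W|\ge 2$. So the program as stated is infeasible, and your feasibility check cannot succeed until these pairings are corrected. The paper's proof makes the same unchecked claim that the constraints ``encode'' the independences, so the defect lies in the statement itself. After the correction, your argument and the paper's both go through, subject to the next point.

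Your worry about the independences is justified and more careful than the paper, whose proof derives both of them from $Z$ being a back-door set.

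The first, $Y_{x_t}\perp X\mid W_{x_t},Z$, does follow. Since $W$ is an ancestor of $Y$ through descendants of $X$, back-door admissibility extends to the pair $(X,W)$. This gives $(Y_{x_t},W_{x_t})\perp X\mid Z$, and weak union finishes.

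The second does not follow. Take $X\to W\to Y$ with a latent fair coin $U$ pointing into $W$ and $Y$, and set $W=X\wedge U$, $Y=W\oplus U$, $x_1=0$, $x_2=1$. The empty set is back-door admissible, yet $W_{x_1}\equiv 0$ while $Y_{x_1}=U=W_{x_2}$.

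However, full mediation is not the missing hypothesis. A direct edge $X\to Y$, as in the paper's own simulation graph, is harmless because $X$ is held fixed inside $Y_{x_t}$. Front-door-style assumptions would therefore needlessly exclude that graph. What is needed is a restriction on $W$--$Y$ confounding: roughly, every back-door path from $W$ to $Y$ must be blocked by $\{X\}\cup Z$, checked as a genuine d-separation in the twin network.

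Your test ``every path passes through $W_{x_t}$ or $Z$'' is not such a check. A member of $Z$ that is a collider between exogenous parents of $W$ and $Y$ opens the path rather than blocking it.
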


\noindent\textbf{Remark.}
Note that, similar to Theorem~\ref{tm1}, Equations~\ref{thm6eq1} and~\ref{thm6eq2} are optional and depend on data availability. Equations~\ref{thm6eq3} enforce the conditional independence $Y_x \independent X \mid W_x,Z_1,...,Z_m$ for all $x \in X$, while Equations~\ref{thm6eq4} enforce $Y_x \independent W_{x'} \mid W_x,Z_1,...,Z_m$ for all $x,x' \in X$ with $x \neq x'$. The detailed proofs of all the theorems are provided in the appendix. Moreover, the causal graph $G$ may be a partial causal graph, provided that $Z_1,\ldots,Z_m$ form a valid back-door set. Finally, $W$ may represent a collection of mediators, either by introducing a vector of mediators $W_1,W_2,\ldots$, (and extend Theorem \ref{tm3}) or by applying the method of \cite{li2022bounds} to construct an equivalent mediator $W'$ with an expanded state space that jointly represents the mediator set.

\noindent\textbf{Remark (Scope of PoCs).}
The above theorems are stated in terms of PNS mainly for convenience of presentation. Since PNS is defined through a joint counterfactual event, it represents the most demanding case and makes clear how partial causal information constrains the space of admissible counterfactual distributions. The same set of constraints, however, applies to other probabilities of causation (PoCs). In particular, other PoCs can be obtained from the same optimization program by changing only the objective function. For example, in Theorem~\ref{tm1}, bounds on PN can be derived by replacing the objective with
\[
\max/\min \sum_{j_{k+1},\ldots,j_{n+m+1}} p_{1\ldots k\, j_{k+1}\ldots j_{n+m+1}\,1},
\]
while leaving all constraints unchanged. No additional structural assumptions are required for these extensions.

\section{Simulation Results}
Since \cite{pearl:etal21-r505} already provides toy examples in simple binary settings, this paper focuses on simulated comparing the proposed Theorems~\ref{tm1}, \ref{tm2}, and \ref{tm3} with the Tian--Pearl bounds in Equations~\ref{pnslb} and \ref{pnsub}, as well as the Mueller--Li--Pearl bounds in \cite{pearl:etal21-r505}. Although the proposed bounds strictly generalize the Tian--Pearl and Mueller--Li--Pearl bounds by accommodating multi-valued treatments and outcomes, our comparisons are restricted to binary settings, since both the Tian--Pearl and Mueller--Li--Pearl bounds are only defined for binary variables.

\subsection{Multiple non-descendant covariates with covariate-specific data}

We first consider settings with varying numbers of non-descendant covariates, as illustrated in Figure~\ref{causalg1}. As discussed earlier, the available information consists only of covariate-specific data, namely $P(X,Y,Z_i)$ and $P(Y_X,Z_i)$ for each covariate $Z_i$.

Following \cite{pearl:etal21-r505}, we randomly generate $1000$ sample distributions compatible with the causal diagrams in Figure~\ref{causalg1}, for $n = 1,\ldots,6$, corresponding to Corollary~\ref{tm2} (the observational distributions are generated following the CPT generation in Appendix~E of \cite{pearl:etal21-r505}, and the experimental distributions are computed using the adjustment formula in \cite{pearl1993aspects} based on the generated observational distributions). The Mueller--Li--Pearl bounds are computed using the best single covariate, since no joint covariate distributions are available. Specifically, for each sample distribution, the bounds are computed separately for each covariate, and the tightest lower and upper bounds among them are selected.

For each sampled distribution $i$, let $[a_i,b_i]$ denote the bounds obtained using the proposed theorem, $[c_i,d_i]$ the Tian--Pearl bounds, and $[e_i,f_i]$ the Mueller--Li--Pearl bounds. For each causal diagram, we report the following summary statistics, as shown in Table~\ref{tab:perf}:
\begin{itemize}
    \item Average increase in the Tian--Pearl (TP) lower bound: $\frac{1}{1000}\sum (a_i - c_i)$
    \item Average decrease in the Tian--Pearl upper bound: $\frac{1}{1000}\sum (d_i - b_i)$
    \item Average increase in the Mueller--Li--Pearl (MLP) lower bound: $\frac{1}{1000}\sum (a_i - e_i)$
    \item Average decrease in the Mueller--Li--Pearl upper bound: $\frac{1}{1000}\sum (f_i - b_i)$
    \item Average gap of the Tian--Pearl bounds: $\frac{1}{1000}\sum (d_i - c_i)$
    \item Average gap of the Mueller--Li--Pearl bounds: $\frac{1}{1000}\sum (f_i - e_i)$
    \item Average gap using Theorem~\ref{tm2}: $\frac{1}{1000}\sum (b_i - a_i)$
    \item Number of sample distributions for which Corollary~\ref{tm2} improves upon the Tian--Pearl bounds: $\sum g_i$, where $g_i = 1$ if $(a_i > c_i)$ or $(b_i < d_i)$, and $g_i = 0$ otherwise
    \item Number of sample distributions for which Corollary~\ref{tm2} improves upon the Mueller--Li--Pearl bounds: $\sum h_i$, where $h_i = 1$ if $(a_i > e_i)$ or $(b_i < f_i)$, and $h_i = 0$ otherwise
\end{itemize}

For each \(n = 4, \ldots, 6\), 100 out of 1{,}000 sampled distributions are randomly selected, sorted according to their lower and upper Tian--Pearl bounds, and then used to plot the Tian--Pearl, Müller--Li--Pearl, and proposed bounds (Figures~\ref{res61}--\ref{res66}).

  
  


  
  


\begin{figure}[t]
\centering

\begin{subfigure}[t]{0.45\linewidth}
\centering
\begin{tikzpicture}[->,>=stealth',node distance=1.5cm,
  thick,main node/.style={circle,fill,inner sep=1.5pt}]
  
  \node[main node] (z1) [label=above:{$Z_1$}] {};
  \node (dots) [below=0.2cm of z1] {$\vdots$};
  \node[main node] (zn) [below=0.2cm of dots,label=above:{$Z_n$}] {};
  \node[main node] (x) [below left=1cm of zn,label=left:$X$] {};
  \node[main node] (y) [below right=1cm of zn,label=right:$Y$] {};
  
  \path
    (x) edge (y)
    (z1) edge (x)
    (z1) edge (y)
    (zn) edge (x)
    (zn) edge (y);
\end{tikzpicture}
\caption{$Z_1,\ldots,Z_n$ are non-descendants of $X$.}
\label{causalg1}
\end{subfigure}
\hfill
\begin{subfigure}[t]{0.45\linewidth}
\centering
\begin{tikzpicture}[->,>=stealth',node distance=2cm,
  thick,main node/.style={circle,fill,inner sep=1.5pt}]
  
  \node[main node] (z) [label=above:{$Z$}] {};
  \node[main node] (w) [below=0.8cm of z,label=above:{$W$}] {};
  \node[main node] (x) [below left=1cm of w,label=left:$X$] {};
  \node[main node] (y) [below right=1cm of w,label=right:$Y$] {};
  
  \path
    (x) edge (y)
    (z) edge (x)
    (z) edge (y)
    (x) edge (w)
    (w) edge (y);
\end{tikzpicture}
\caption{$Z$ is the backdoor set and $W$ is the mediator.}
\label{causalg2}
\end{subfigure}

\caption{Causal structures used in the experiments.}
\label{fig:causal_structures}
\end{figure}
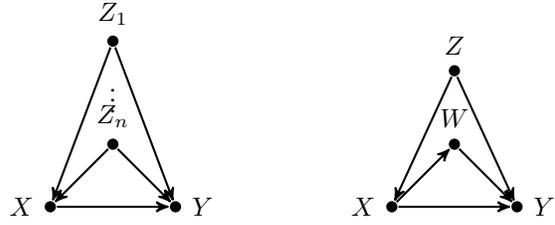

\begin{table*}[t]
\centering
\caption{Performance comparison of bounds with multiple non-descendant covariates using covariate-specific data.}
\label{tab:perf}
\begin{tabular}{l|ccccccccc}
\hline
$n$ 
& $\uparrow$ TP lb 
& $\downarrow$ TP ub 
& $\uparrow$ MLP lb 
& $\downarrow$ MLP ub 
& gap (TP) 
& gap (MLP) 
& gap (Col.~\ref{tm2}) 
& \# imp. TP 
& \# imp. MLP \\
\hline
1 
& 0.0263 & 0.0251 & 0     & 0     & 0.2136 & 0.1622 & 0.1622 & 634 & 0 \\

2 
& 0.0407 & 0.0421 & 0.0007 & 0.0009 & 0.2690 & 0.1878 & 0.1862 & 848 & 565 \\

3 
& 0.0453 & 0.0447 & 0.0020 & 0.0020 & 0.3101 & 0.2242 & 0.2201 & 927 & 673 \\

4 
& 0.0478 & 0.0478 & 0.0033 & 0.0033 & 0.3469 & 0.2578 & 0.2512 & 965 & 727 \\

5 
& 0.0436 & 0.0422 & 0.0037 & 0.0035 & 0.3693 & 0.2907 & 0.2835 & 991 & 787 \\

6 
& 0.0402 & 0.0398 & 0.0043 & 0.0039 & 0.3965 & 0.3247 & 0.3165 & 996 & 818 \\
\hline
\end{tabular}
\end{table*}

\begin{figure}[t]
\centering
\includegraphics[width=0.33\textwidth]{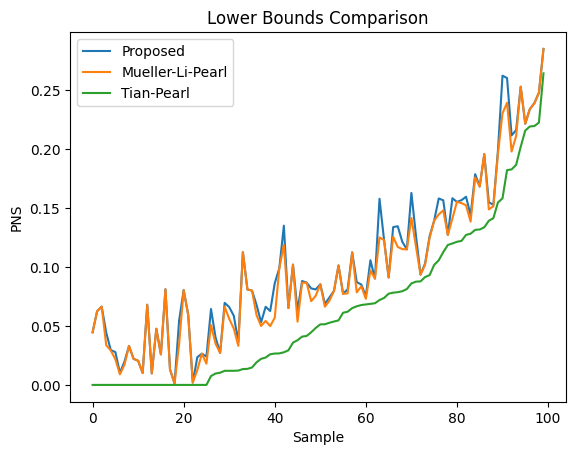}
\caption{PNS lower bounds for $n=6$ in Figure \ref{causalg1}.}
\label{res61}
\end{figure}
\begin{figure}[t]
\centering
\includegraphics[width=0.33\textwidth]{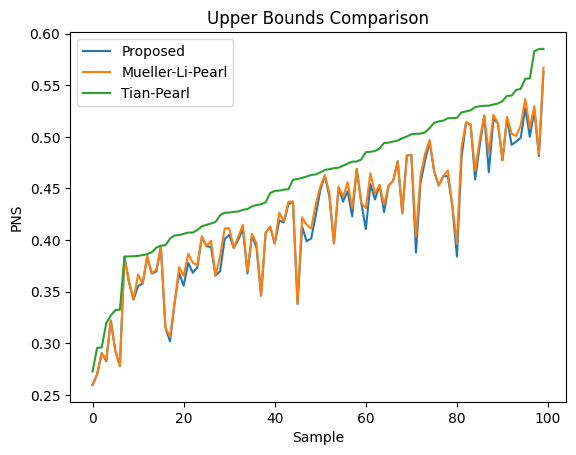}
\caption{PNS lower bounds for $n=6$ in Figure \ref{causalg1}.}
\label{res62}
\end{figure}

\begin{figure}[t]
\centering
\includegraphics[width=0.33\textwidth]{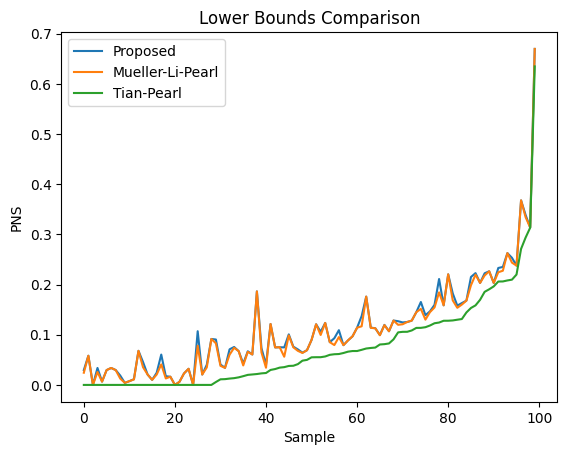}
\caption{PNS lower bounds for $n=5$ in Figure \ref{causalg1}.}
\label{res63}
\end{figure}
\begin{figure}[t]
\centering
\includegraphics[width=0.33\textwidth]{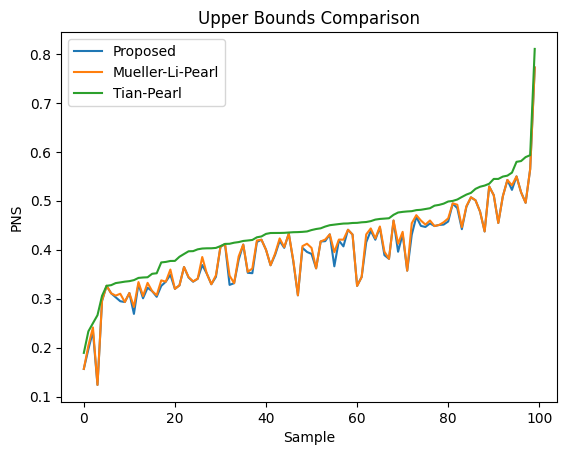}
\caption{PNS lower bounds for $n=5$ in Figure \ref{causalg1}.}
\label{res64}
\end{figure}

\begin{figure}[t]
\centering
\includegraphics[width=0.33\textwidth]{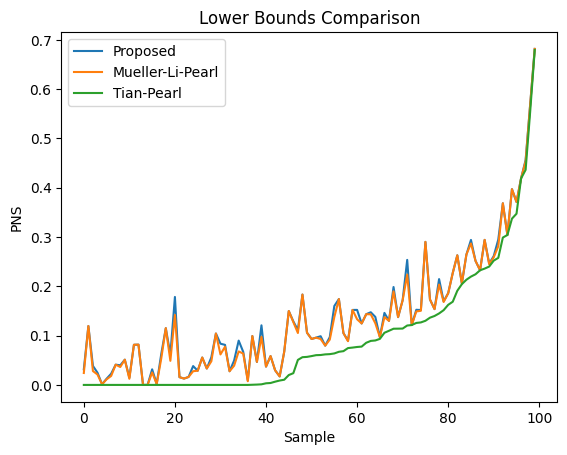}
\caption{PNS lower bounds for $n=4$ in Figure \ref{causalg1}.}
\label{res65}
\end{figure}
\begin{figure}[t]
\centering
\includegraphics[width=0.33\textwidth]{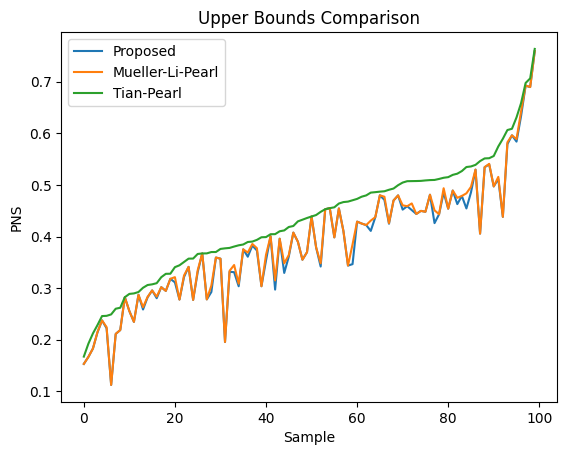}
\caption{PNS lower bounds for $n=4$ in Figure \ref{causalg1}.}
\label{res66}
\end{figure}

Table~\ref{tab:perf} demonstrates that incorporating multiple non-descendant covariates through Corollary~\ref{tm2} consistently tightens the bounds relative to both the Tian--Pearl (TP) and Mueller--Li--Pearl (MLP) baselines. Compared with the TP bounds, the proposed method achieves substantial average improvements in both the lower and upper bounds across all values of $n$, with mean lower-bound increases and upper-bound decreases of approximately $0.02$--$0.04$. These improvements translate into a marked reduction of the bound width: while the average TP gap increases from $0.2136$ to $0.3965$ as $n$ grows, the gap under Corollary~\ref{tm2} remains consistently smaller, increasing only from $0.1622$ to $0.3165$.

Relative to the MLP bounds, Theorem~\ref{tm2} yields modest but steadily growing improvements as the number of covariates increases. Although no improvement is possible when $n=1$, the average gains in both the lower and upper bounds increase monotonically with $n$, and the resulting gaps are uniformly smaller than those of MLP. Correspondingly, the proportion of sample distributions benefiting from Theorem~\ref{tm2} increases rapidly with $n$, exceeding $90\%$ relative to TP for $n \ge 3$, and reaching over $80\%$ relative to MLP by $n=6$. These results confirm that aggregating covariate-specific information across multiple non-descendant covariates can substantially tighten bounds even in the absence of joint covariate distributions.

\subsection{Backdoor Set with Mediators}
As for the mediator case with backdoor set, we compare the proposed Theorem~\ref{tm3} with the Tian--Pearl bounds and the Mueller--Li--Pearl bounds under the causal structure shown in Figure~\ref{causalg2}, which includes one confounder and one mediator. In this setting, the Tian--Pearl bounds do not exploit either additional covariates, the Mueller--Li--Pearl bounds utilize only the confounder, and the proposed theorem incorporates information from both the confounder and the mediator. Following the same simulation protocol as before, we generate $1000$ compatible sample distributions. We report the same summary statistics as before, as shown in Table~\ref{tab:perf_col12}.
\begin{table*}[t]
\centering
\caption{Performance comparison for the mediator case with a back-door set under the causal structure in Figure~\ref{causalg2}.}
\label{tab:perf_col12}
\begin{tabular}{ccccccccc}
\hline
$\uparrow$ TP lb 
& $\downarrow$ TP ub
& $\uparrow$ MLP lb
& $\downarrow$ MLP ub
& gap (TP)
& gap (MLP)
& gap (Thm~\ref{tm3})
& \# impr. TP
& \# impr. MLP \\
\hline
0.0268
& 0.0258
& 0.0057
& 0.0051
& 0.2833
& 0.2417
& 0.2308
& 782
& 245 \\
\hline
\end{tabular}
\end{table*}

The results in Table~\ref{tab:perf_col12} indicate that incorporating mediator information consistently tightens the bounds. Compared to the Tian--Pearl bounds, Theorem~\ref{tm3} achieves a substantial reduction in the average bound gap, highlighting the benefit of exploiting auxiliary structure beyond using only treatment and outcome variables, without considering any covariates. Although the Mueller--Li--Pearl bounds already improve upon Tian--Pearl by conditioning on the confounder, their framework does not allow confounder and mediator information to be incorporated simultaneously. In contrast, the proposed formulation flexibly integrates both sources of information through additional constraints, yielding the smallest average gap among all methods. Moreover, Theorem~\ref{tm3} improves upon the Tian--Pearl bounds for the majority of sampled distributions and also provides nontrivial improvements over the Mueller--Li--Pearl bounds. Due to space limitations, we restrict attention to a single mediator; investigating the effect of multiple mediators is left for future work.

We further randomly select $100$ out of the $1000$ sampled distributions, sort them by their Tian--Pearl lower and upper bounds, and plot the corresponding Tian--Pearl, Mueller--Li--Pearl, and proposed bounds in Figures~\ref{res71} and \ref{res72}.

\begin{figure}[t]
\centering
\includegraphics[width=0.33\textwidth]{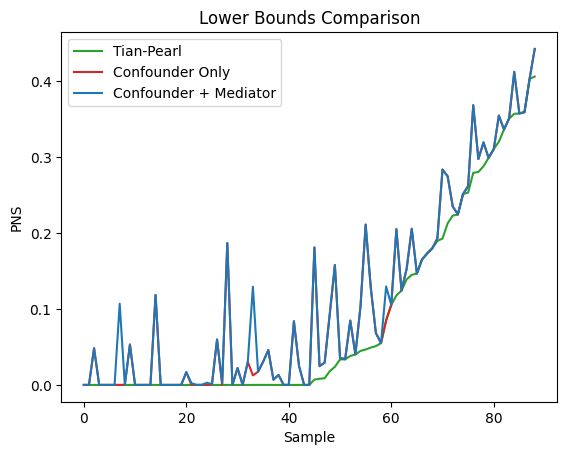}
\caption{PNS lower bounds comparison in Figure\ref{causalg2}.}
\label{res71}
\end{figure}

\begin{figure}[t]
\centering
\includegraphics[width=0.32\textwidth]{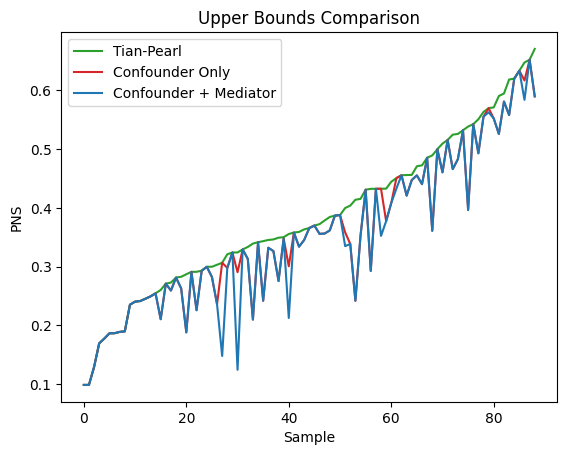}
\caption{PNS upper bounds comparison in Figure \ref{causalg2}.}
\label{res72}
\end{figure}

\section{Conclusion}

This paper studies the problem of bounding probabilities of causation when causal knowledge is incomplete but informative. Rather than assuming either no structural information or a fully specified causal diagram, we propose an optimization-based framework that allows partial causal information to be incorporated as constraints on counterfactual distributions. This formulation provides a unified way to combine experimental data, observational data, and heterogeneous structural assumptions within a single bounding procedure.

Our results extend existing bounds in several directions. First, the framework subsumes the classical Tian--Pearl bounds and graph-based bounds as special cases, while allowing treatments and outcomes to take arbitrary finite values. Second, it enables covariate information to be used in a modular fashion, without requiring joint distributions over all covariates. Third, the same formulation accommodates mediator information alongside back-door covariates, yielding further tightening when such information is available. In each case, the bounds remain formally valid under uncertainty about the remaining causal structure.

Simulation results confirm that the proposed bounds are consistently tighter than existing baselines. In settings with multiple non-descendant covariates, even when only covariate-specific data are available, aggregating information across covariates substantially reduces bound width compared to both Tian--Pearl and Mueller--Li--Pearl bounds. Additional improvements are obtained when mediator information is incorporated. These gains become more pronounced as more partial causal information is available, illustrating the benefit of systematically integrating weak but informative structural assumptions.

Overall, this work demonstrates that meaningful progress on individualized causal questions is possible without full causal identification. By treating partial causal knowledge as constraints in an optimization problem, probabilities of causation can be bounded more tightly in realistic settings where causal structure is incomplete. We view this framework as a step toward making counterfactual reasoning more applicable in domains where causal knowledge is necessarily fragmentary but still informative.

\section{Limitations and future work.}
A limitation of our framework is computational. The programs in Theorems~\ref{tm1} and~\ref{tm2} are linear and can be solved efficiently with standard linear programming solvers, even for moderately sized discrete domains. However, once mediator information is incorporated, Theorem~\ref{tm3} introduces nonlinear constraints (e.g., bilinear equalities enforcing conditional independencies), which turns the optimization into a nonconvex problem and may lead to increased runtime and sensitivity to local optima as the state spaces grow. An important direction for future work is to develop simpler representations of mediator information that preserve most of the tightening effect while recovering tractable programs, for example by constructing lower-dimensional or aggregated mediator summaries, deriving equivalent linear constraints via alternative parameterizations, or designing relaxations that yield efficiently computable certified bounds.

\clearpage
\bibliography{ang.bib}
\bibliographystyle{icml2026}

\newpage
\appendix
\onecolumn
\section{Appendix}
\subsection{Proof of Theorem \ref{tm1}}
\begin{proof}
    The theorem follows directly from the fact that the variables
\( p_{j_1 \ldots j_{n+m+1}} \) represent the joint probability
\( P(Y_{x_1}, \ldots, Y_{x_n}, Z_1, \ldots, Z_m, X) \).
Equation~\ref{thm4eq1} encodes the experimental distribution constraints,
while Equation~\ref{thm4eq2} encodes the observational distribution constraints.

\end{proof}
\subsection{Proof of Theorem \ref{tm2}}
\begin{proof}
    Again, the theorem follows directly from the fact that the variables $p_{j_1 \ldots j_{n+m+1}}$ represent the joint probability $P(Y_{x_1}, \ldots, Y_{x_n}, Z_1, \ldots, Z_m, X)$. Equation~\ref{tm5eq1} encodes the experimental distribution constraints,
while Equation~\ref{tm5eq2} encodes the observational distribution constraints.
\end{proof}
\subsection{Proof of Theorem \ref{tm3}}
\begin{proof}
    The theorem again follows from the fact that the variables $p_{j_1...j_{2n+m+1}}$ represent the joint probability $P(Y_{x_1},...,Y_{x_n},W_{x_1},...,W_{x_n},Z_1,...,Z_m,W,X)$, Equation~\ref{thm6eq1} encodes the experimental distribution constraints, while Equation~\ref{thm6eq2} encodes the observational distribution constraints.
    
    Moreover, since $(Z_1,\ldots,Z_m)$ forms a backdoor set, it follows that
$Y_x \independent X \mid W_x, Z_1,\ldots,Z_m$ for all $x \in {X}$, and
$Y_x \independent W_{x'} \mid W_x, Z_1,\ldots,Z_m$ for all $x,x' \in {X}$ with $x \neq x'$.
These conditional independences can be read off from the twin network representation
\cite{pearl2009causality}. Equation~\ref{thm6eq3} encodes the former independence,
while Equation~\ref{thm6eq4} encodes the latter.

\end{proof}



\end{document}